\algrenewcommand\algorithmicrequire{\textbf{Input:}}
\algrenewcommand\algorithmicensure{\textbf{Output:}}
\algnewcommand\algorithmicinput{\textbf{Input:}}
\algnewcommand\INPUT{\item[\algorithmicinput]}
\colorlet{shadecolor}{gray!10}
\newtheorem{problem}{Problem}
\newtheorem{theorem}{Theorem}
\newcommand\blfootnote[1]{%
  \begingroup
  \renewcommand\thefootnote{}\footnote{#1}%
  \addtocounter{footnote}{-1}%
  \endgroup
}
\NewDocumentCommand\bbm{}{ \begin{bmatrix} }
\NewDocumentCommand\ebm{}{ \end{bmatrix} }
\NewDocumentCommand\Vector{m}{ \boldsymbol{\mathbf{#1}} }
\NewDocumentCommand\Matrix{m}{ \boldsymbol{\mathbf{#1}} }
\NewDocumentCommand\Trace{m}{ \mathrm{tr}\left(#1\right) }
\NewDocumentCommand\Norm{m}{\left\Vert#1\right\Vert }
\NewDocumentCommand\PartialDerivative{mm}{ \frac{\partial #1}{\partial #2} }
\NewDocumentCommand\ArgMin{m}{ \operatorname*{argmin}_{#1} }
\NewDocumentCommand\Real{}{ \mathbb{R} }
\NewDocumentCommand\RealProjective{m}{ \mathbb{RP}^{#1}}
\NewDocumentCommand\Sym{}{ \mathbb{S} }
\NewDocumentCommand\SymmetricMatrices{m}{\Sym^{#1}}
\NewDocumentCommand\LieGroupSO{m}{ \mathrm{SO}(#1) }
\NewDocumentCommand\LieGroupSE{m}{ \mathrm{SE}(#1) }
\NewDocumentCommand\MatLog{m}{\mathrm{Log}\left({#1}\right)}
\NewDocumentCommand\MatExp{m}{\mathrm{Exp}\left({#1}\right)}
\NewDocumentCommand\NormalDistribution{mm}{ \mathcal{N}\left(#1,#2\right) }
\NewDocumentCommand\Identity{}{ \Matrix{I} }
\NewDocumentCommand\Rotation{}{ \Matrix{R} }
\NewDocumentCommand\Quaternion{}{ \Vector{q} }
\NewDocumentCommand\Mean{m}{\overline{#1}}
\NewDocumentCommand\Defined{}{\triangleq}
\NewDocumentCommand\T{}{\mathsf{T}}
\NewDocumentCommand\QuatProd{}{\odot}
\NewDocumentCommand\QuatMat{}{\Matrix{\Omega}}
\NewDocumentCommand\KronProd{}{\otimes}
\NewDocumentCommand\diag{m}{\text{diag}\left(#1\right)}
\NewDocumentCommand\BinghamDistribution{mm}{\textsc{Bingham}\left(#1, #2\right)} 
\NewDocumentCommand\QuatDist{mm}{d_{\text{quat}}\left(#1, #2\right)} 
\NewDocumentCommand\ChordDist{mm}{d_{\text{chord}}\left(#1, #2\right)} 
\NewDocumentCommand\AngDist{mm}{d_{\text{ang}}\left(#1, #2\right)} 
\NewDocumentCommand\dc{}{\lambda^{\text{dc}}} %Dispersion coefficient
\NewDocumentCommand\eigval{}{\lambda} %Dispersion coefficient
\NewDocumentCommand\ShapeNet{}{\textsc{ShapeNet}~} %Dispersion coefficient
\begin{document}

% paper title
\title{A Smooth Representation of Belief over $\LieGroupSO{3}$ for Deep Rotation Learning with Uncertainty}

\author{\authorblockN{Valentin Peretroukhin,$^{1,3}$ Matthew Giamou,$^1$ David M. Rosen,$^2$ W. Nicholas Greene,$^3$ \\ Nicholas Roy,$^3$ and Jonathan Kelly$^1$}
\authorblockA{${^1}$Institute for Aerospace Studies, University of Toronto;\\
${^2}$Laboratory for Information and Decision Systems, \\ ${^3}$Computer Science \& Artificial Intelligence Laboratory, Massachusetts Institute of Technology}}

\maketitle

\begin{abstract}
Accurate rotation estimation is at the heart of robot perception tasks such as visual odometry and object pose estimation. Deep neural networks have provided a new way to perform these tasks, and the choice of rotation representation is an important part of network design. In this work, we present a novel symmetric matrix representation of the 3D rotation group, $\LieGroupSO{3}$,  with two important properties that make it particularly suitable for learned models: (1) it satisfies a smoothness property that improves convergence and generalization when regressing large rotation targets, and (2) it encodes a symmetric Bingham belief over the space of unit quaternions, permitting the training of uncertainty-aware models. We empirically validate the benefits of our formulation by training deep neural rotation regressors  on two data modalities. First, we use synthetic point-cloud data to show that our representation leads to superior predictive accuracy over existing representations for arbitrary rotation targets. Second, we use image data collected onboard ground and aerial vehicles to demonstrate that our representation is amenable to an effective out-of-distribution (OOD) rejection technique that significantly improves the robustness of rotation estimates to unseen environmental effects and corrupted input images, without requiring the use of an explicit likelihood loss, stochastic sampling, or an auxiliary classifier. This capability is key for safety-critical applications where detecting novel inputs can prevent catastrophic failure of learned models.

\end{abstract}

\IEEEpeerreviewmaketitle

\section{Introduction}
\urlstyle{tt}

Rotation estimation constitutes one of the core challenges in robotic state estimation.\blfootnote{This material is based upon work that was supported in part by the Army Research Laboratory under Cooperative Agreement Number W911NF-17-2-0181. Their support is gratefully acknowledged.} Given the broad interest in applying deep learning to state estimation tasks involving rotations \cite{wang_deep_2019,tang_ba-net_2019,ranftl_deep_2018, peretroukhin_dpc-net_2018, peretroukhin_deep_2019, brachmann_neural-guided_2019,yi_learning_2018,clark_learning_2018, sattler_understanding_2019}, we consider the suitability of different rotation representations in this domain. The question of which rotation parameterization to use for estimation and control problems has a long history in aerospace engineering and robotics \citep{diebel_representing_2006}. In learning, unit quaternions (also known as Euler parameters) are a popular choice for their numerical efficiency, lack of singularities, and simple algebraic and geometric structure. Nevertheless, a standard unit quaternion parameterization does not satisfy an important continuity property that is essential for learning  arbitrary rotation targets, as recently detailed in \citep{zhou_continuity_2019}. To address this deficiency, the authors of \citep{zhou_continuity_2019} derived two alternative rotation representations that satisfy this property and lead to better network performance. Both of these representations, however, are \textit{point} representations, and do not quantify network uncertainty---an important capability in safety-critical applications. 

In this work, we introduce a novel representation of $\LieGroupSO{3}$ based on a symmetric matrix that combines these two important properties. Namely, it
\begin{enumerate}
	\item admits a smooth global section from $\LieGroupSO{3}$ to the representation space (satisfying the continuity property identified by the authors of \citep{zhou_continuity_2019});
	\item defines a Bingham distribution over unit quaternions; and 
	\item is amenable to a novel out-of-distribution (OOD) detection method without any additional stochastic sampling, or auxiliary classifiers.
\end{enumerate}
\Cref{fig:front_page} visually summarizes our approach. Our experiments use synthetic and real datasets to highlight the key advantages of our approach. We provide open source Python code\footnote{Code available at \url{https://github.com/utiasSTARS/bingham-rotation-learning}.} of our method and experiments. Finally, we note that our representation can be implemented in only a few lines of code in modern deep learning libraries such as PyTorch, and has marginal computational overhead for typical learning pipelines.

\begin{figure}
\centering 
\includegraphics[width=\columnwidth]{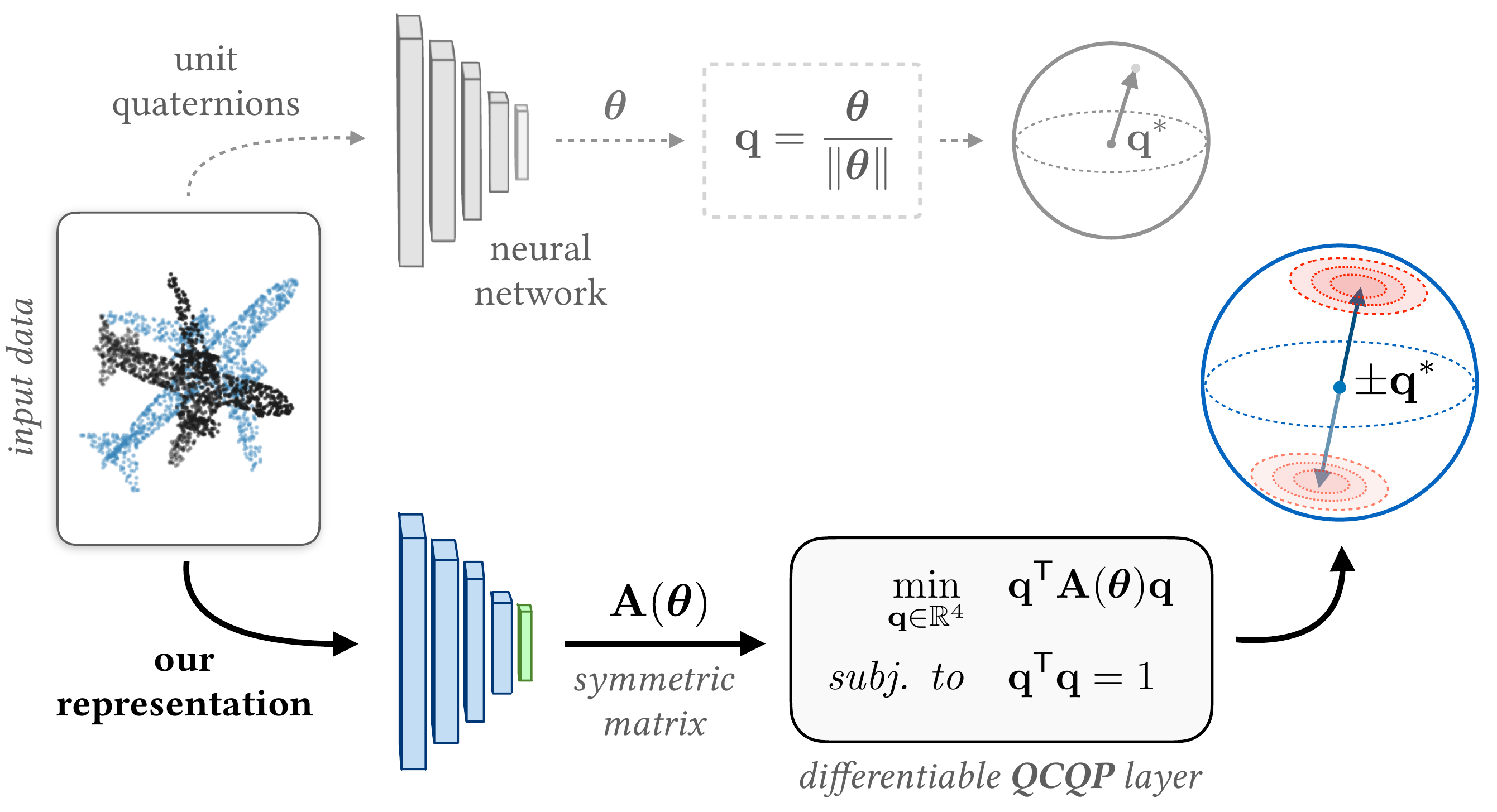}
\caption{We represent rotations through a symmetric matrix, $\Matrix{A}$, that defines a Bingham distribution over unit quaternions. To apply this representation to deep rotation regression, we present a differentiable layer parameterized by $\Matrix{A}$ and show how we can extract a notion of uncertainty from the spectrum of $\Matrix{A}$.}
\label{fig:front_page}
\end{figure}

\section{Related Work} \label{sec:related_work}

Estimating rotations has a long and rich history in computer vision and robotics \citep{scaramuzza_visual_2011,wahba_problem_1965,horn_closed-form_1987}. An in-depth survey of rotation averaging problem formulations and solution methods that deal directly with multiple rotation measurements is presented in \citep{hartley_rotation_2013}. In this section, we briefly survey techniques that estimate rotations from raw sensor data, with a particular focus on prior work that incorporates machine learning into the rotation estimation pipeline. We also review recent work on differentiable optimization problems and convex relaxation-based solutions to rotation estimation problems that inspired our work. 

\subsection{Rotation Parameterization}
In robotics, it is common to parameterize rotation states as elements of the matrix Lie group $\LieGroupSO{3}$ \cite{sola_micro_2018, barfoot_state_2017}. This approach facilitates the application of Gauss-Newton-based optimization and a local uncertainty quantification through small perturbations defined in a tangent space about an operating point in the group. In other state estimation contexts, applications may eschew full $3\times3$ orthogonal matrices with positive determinant (i.e., elements of $\LieGroupSO{3}$) in favour of lower-dimensional representations with desirable properties \citep{diebel_representing_2006}. For example, Euler angles \citep{sra_directional_2016} are particularly well-suited to analyzing small perturbations in the steady-state flight of conventional aircraft because reaching a singularity is practically impossible \citep{etkin_dynamics_1972}. In contrast, spacecraft control often requires large-angle maneuvers for which singularity-free unit quaternions are a popular choice \citep{wie_quaternion_1985}.

\subsection{Learning-based Rotation Estimation}
\label{sec:learning-based-est}
Much recent robotics literature has focused on improving classical pose estimation with learned models. Learning can help improve outlier classification \citep{yi_learning_2018}, guide random sample consensus \citep{brachmann_neural-guided_2019}, and fine-tune robust losses \citep{ranftl_deep_2018}. Further, fusing learned models of rotation with classical pipelines has been shown to improve accuracy and robustness of egomotion estimates \citep{peretroukhin_dpc-net_2018,peretroukhin_deep_2019}.

In many vision contexts, differentiable solvers have been proposed to incorporate learning into bundle adjustment \citep{tang_ba-net_2019}, monocular stereo \citep{clark_learning_2018}, point cloud registration \cite{wang_deep_2019}, and fundamental matrix estimation \citep{ranftl_deep_2018}. All of these methods rely on either differentiating a singular value decomposition \citep{ranftl_deep_2018, wang_deep_2019}, or `unrolling' local iterative solvers for a fixed number of iterations \citep{tang_ba-net_2019,clark_learning_2018}. Furthermore, adding interpretable outputs to a learned pipeline has been shown to improve generalization \citep{zhou_does_2019} and the paradigm of differentiable pipelines has been suggested to tackle an array of different robotics tasks \citep{karkus_differentiable_2019}.

The effectiveness of learning with various $\LieGroupSO{3}$ representations is explicitly addressed in \citep{zhou_continuity_2019}. 
 Given a \textit{representation} of $\LieGroupSO{3}$, by which we mean a surjective mapping $f : \mathcal{X} \rightarrow \LieGroupSO{3}$, the authors of \citep{zhou_continuity_2019} identified the existence of a continuous right-inverse of $f$, $g : \LieGroupSO{3} \rightarrow \mathcal{X}$, as important for learning.  Intuitively, the existence of such a $g$ ensures that the training signal remains continuous for regression tasks, reducing errors on unseen inputs. Similarly, an empirical comparison of $\LieGroupSE{3}$ representations for learning complex forward kinematics is conducted in \citep{grassmann_merits_2019}. Although full $\LieGroupSE{3}$ pose estimation is important in most robotics applications, we limit our analysis to $\LieGroupSO{3}$ representations as most pose estimation tasks can be decoupled into rotation and translation components, and the rotation component of $\LieGroupSE{3}$ constitutes the main challenge.

\subsection{Learning Rotations with Uncertainty}
Common ways to extract uncertainty from neural networks include approximate variational inference through Monte Carlo dropout \cite{gal_uncertainty_2016} and bootstrap-based uncertainty through an ensemble of models \cite{lakshminarayanan_simple_2017} or with multiple network \textit{heads} \cite{osband_deep_2016}. In prior work \cite{peretroukhin_deep_2019}, we have proposed a mechanism that extends these methods to $\LieGroupSO{3}$ targets through differentiable quaternion averaging and a local notion of uncertainty in the tangent space of the mean. 

Additionally, learned methods can be equipped with novelty detection mechanisms (often referred to as out-of-distribution or OOD detection) to help account for epistemic uncertainty. An autoencoder-based approach to OOD detection  was used on a visual navigation task in \citep{richter_safe_2017} to ensure that a safe control policy was used in novel environments. A similar approach was applied in \citep{amini_variational_2018}, where a single variational autoencoder was used for novelty detection and control policy learning. See \citep{meinke_neural_2019} for a recent summary of OOD methods commonly applied to classification tasks. 

 Finally, unit quaternions are also important in the broad body of work related to learning directional statistics \cite{sra_directional_2016} that enable global notions of uncertainty through densities like the Bingham distribution, which are especially useful in modelling large-error rotational distributions \cite{glover_tracking_2013, glitschenski_deep_2020}. Since we demonstrate that our representation parameterizes a Bingham belief, it is perhaps most similar to a recently published approach that uses a Bingham likelihood to learn global uncertainty over rotations \cite{glitschenski_deep_2020}. Our work differs from this approach in several important aspects: (1) our formulation is more parsimonious; it requires only a single symmetric matrix with 10 parameters to encode both the mode and uncertainty of the Bingham belief, (2) we present analytic gradients for our approach by considering a generalized QCQP-based optimization over rotations, (3) we prove that our representation admits a smooth right-inverse, and (4) we demonstrate that we can extract useful notions of uncertainty from our parameterization without using an explicit Bingham likelihood during training, avoiding the complex computation of the Bingham distribution's normalization constant.

\section{Symmetric Matrices and $\LieGroupSO{3}$}

Our rotation representation is defined using the set of real symmetric $4 \times 4$ matrices with a simple (i.e., non-repeated) minimum eigenvalue:
\begin{equation}
\label{eq:our_rep}
\Matrix{A} \in \SymmetricMatrices{4} : \lambda_1(\Matrix{A}) \neq \lambda_2(\Matrix{A}),
\end{equation}
where $\lambda_i$ are the eigenvalues of $\Matrix{A}$ arranged such that $\lambda_1 \leq \lambda_2 \leq \lambda_3 \leq \lambda_4$, and $\SymmetricMatrices{n}\triangleq \{\Matrix{A} \in \Real^{n \times n} : \Matrix{A} = \Matrix{A}^\T\}$. 
Each such matrix can be mapped to a unique rotation through a differentiable quadratically-constrained quadratic program (QCQP) defined in \Cref{fig:layer} and by \Cref{prob:quaternion_qcqp}. This representation has several advantages in the context of learned models. In this section, we will (1) show how the matrix $\Matrix{A}$ arises as the data matrix of a parametric QCQP and present its analytic derivative, (2) show that our representation is \textit{continuous} (in the sense of \citep{zhou_continuity_2019}), (3) relate it to the Bingham distribution over unit quaternions, and (4) discuss an intimate connection to rotation averaging.

\subsection{Rotations as Solutions to QCQPs} \label{sec:rotation_qcqps}
Many optimizations that involve rotations can be written as the constrained quadratic loss
\begin{align}
\min_{\Vector{x} \in \Real^{n}} &\quad \Vector{x}^\T \Matrix{A} \Vector{x}  \\
\text{\textit{subj. to}} &\quad \Vector{x} \in \mathcal{C} \notag,
\end{align}
where $\Vector{x}$ parameterizes a rotation, $\mathcal{C}$ defines a set of appropriate constraints, and $\Matrix{A}$ is a \textit{data matrix} that encodes error primitives, data association, and uncertainty. For example, consider the Wahba problem (WP) \citep{wahba_problem_1965}:

\begin{problem}[WP with unit quaternions]
\label{prob:wahba_quat}
Given a set of associated vector measurements $\{\Vector{u}_i, \Vector{v}_i\}_{i=1}^N \subset \Real^3$, find $\Quaternion \in S^3$ that solves
\begin{align}
\min_{\Quaternion \in S^3}\sum_{i=1}^N \frac{1}{\sigma_i^2}\Norm{\hat{\Vector{v}}_i - \Quaternion \QuatProd \hat{\Vector{u}}_i \QuatProd \Quaternion^{-1}}^2.
\end{align}
where $\hat{\Vector{p}} \Defined \bbm \Vector{p}^\T & 0 \ebm^\T$ is the homogenization of vector $\Vector{p}$ and $\QuatProd$ refers to the quaternion product.
\end{problem}
\noindent We can convert \Cref{prob:wahba_quat} to the following Quadratically-Constrained Quadratic Program (QCQP): 

\begin{problem}[WP as a QCQP]
\label{prob:wahba_qcqp}
Find $\Quaternion \in \Real^{4}$ that solves
\begin{align}
\min_{\Quaternion \in \Real^{4}} &\quad \Quaternion^\T \Matrix{A} \Quaternion  \\
\text{\textit{subj. to}} &\quad \Quaternion^\T\Quaternion = 1 \notag,
\end{align}
where the data matrix, $\Matrix{A} = \sum_{i=1}^N \Matrix{A}_i \in \SymmetricMatrices{4}$, is the sum of $N$ terms each given by 
\begin{align} \label{eq:wahba_cost_matrix}
\Matrix{A}_i = \frac{1}{\sigma_i^2} \left( (\Norm{\Vector{u}_i}^2 + \Norm{\Vector{v}_i}^2 ) \Identity + 2 \QuatMat_\ell(\hat{\Vector{v}}_i)\QuatMat_r(\hat{\Vector{u}}_i) \right), 
\end{align}
where $\QuatMat_\ell$ and $\QuatMat_r$ are left and right quaternion product matrices, respectively (cf. \cite{yang_quaternion-based_2019}).
\end{problem}
\noindent Constructing such an $\Matrix{A}$ for input data that does not contain vector correspondences constitutes the primary challenge of rotation estimation. Consequently, we consider applying the tools of high capacity data-driven learning to the task of predicting $\Matrix{A}$ for a given input. To this end, we generalize \Cref{prob:wahba_qcqp} and consider a parametric symmetric matrix $\Matrix{A}(\Vector{\theta})$:
\begin{problem}[Parametric Quaternion QCQP]
\label{prob:quaternion_qcqp}	
\begin{align}
\min_{\Quaternion \in \Real^{4}} &\quad \Quaternion^\T \Matrix{A}(\Vector{\theta})\Quaternion \\ %+\Vector{b}(\Vector{\theta})^\T\Quaternion \\
\text{\textit{subj. to}} &\quad \Quaternion^\T\Quaternion = 1 \notag,
\end{align}
where $\Matrix{A}(\Vector{\theta}) \in \SymmetricMatrices{4}$ defines a quadratic cost function parameterized by $\Vector{\theta} \in \Real^{10}$.% and $\Vector{b}(\Vector{\theta}) \in \Real^{4}$. 
\end{problem}
\subsubsection{Solving \Cref{prob:quaternion_qcqp}}
\Cref{prob:quaternion_qcqp} is minimized by a pair of antipodal unit quaternions, $\pm \Quaternion^*$, lying in the one-dimensional minimum-eigenspace of $\Matrix{A}(\Vector{\theta})$ \citep{horn_closed-form_1987}. Let $\SymmetricMatrices{4}_\lambda$ be the subset of $\SymmetricMatrices{4}$ with simple minimal eigenvalue $\lambda_1$: 
\begin{equation}
	\SymmetricMatrices{4}_\lambda \triangleq \{\Matrix{A} \in \SymmetricMatrices{4} : \lambda_1(\Matrix{A}) \neq \lambda_2(\Matrix{A}) \}
\end{equation} 
For any $\Matrix{A}(\Vector{\theta}) \in \SymmetricMatrices{4}_\lambda$, \Cref{prob:quaternion_qcqp} admits the solution $\pm \Quaternion^*$. Since $\LieGroupSO{3}$ is the quotient space of the unit quaternions obtained by identifying antipodal points, this represents a single rotation solution $\Rotation^* \in \LieGroupSO{3}$. Eigendecomposition of a real symmetric matrix can be implemented efficiently in most deep learning frameworks (e.g., we use the \texttt{symeig} function in  PyTorch). In practice, we encode $\Matrix{A}$ as 
\begin{equation}
\label{eq:A_theta_map}
\Matrix{A}(\Vector{\theta}) = \bbm \theta_1 & \theta_2 & \theta_3 & \theta_4 \\  \theta_2 & \theta_5 & \theta_6 & \theta_7 \\  \theta_3 & \theta_6 & \theta_8 & \theta_9 \\  \theta_4 & \theta_7 & \theta_9 & \theta_{10} \ebm,	
\end{equation}
 with no restrictions on $\Vector{\theta}$ to ensure that $\lambda_1(\Matrix{A}) \neq \lambda_2(\Matrix{A})$. We find that this does not impede training, and note that the complement of $\SymmetricMatrices{4}_\lambda$ is a set of measure zero in $\SymmetricMatrices{4}$.
\subsubsection{Differentiating \Cref{prob:quaternion_qcqp} wrt $\Matrix{A}$}
The derivative of $\Quaternion^*$ with respect to $\Vector{\theta}$ is guaranteed to exist if $\lambda_1$ is simple \citep{magnus_differentiating_1985}. Indeed, one can use the implicit function theorem to show that
\begin{equation}
	\PartialDerivative{\Quaternion^*}{\text{vec}(\Matrix{A})} = \Quaternion^* \KronProd \left( \lambda_1 \Identity -\Matrix{A} \right)^\dagger,
\end{equation}
where $(\cdot)^\dagger$ denotes the Moore-Penrose pseudo-inverse, $\KronProd$ is the Kronecker product, and $\Identity$ refers to the identity matrix. This gradient is implemented within the \texttt{symeig} function in PyTorch and can be efficiently implemented in any framework that allows for batch linear system solves.
\begin{figure}[t]
\centering
\includegraphics[width=0.75\columnwidth]{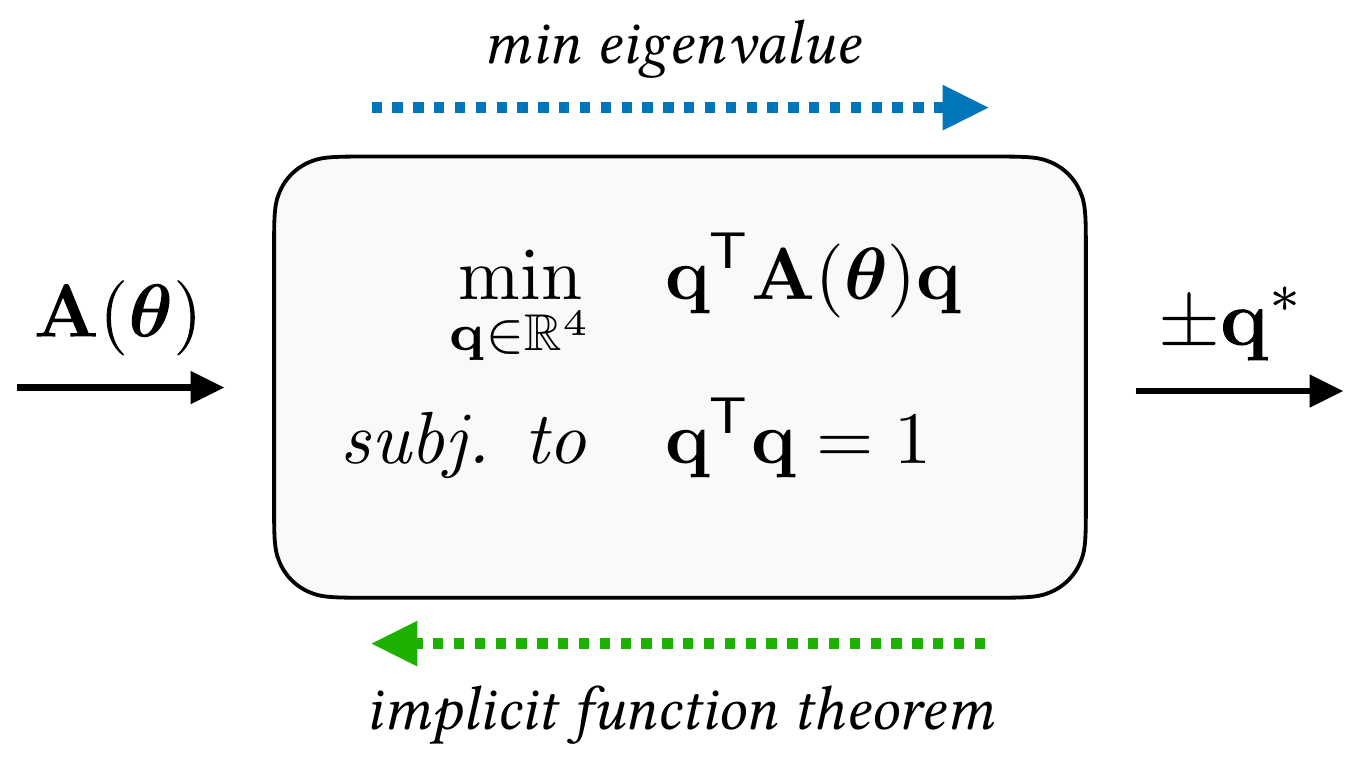}
\caption{A differentiable QCQP layer representing \Cref{prob:quaternion_qcqp}. The layer takes as input a symmetric matrix $\Matrix{A}$ defined by the parameters $\Vector{\theta}$. The solution is given by the minimum-eigenspace of $\Matrix{A}$ and the implicit function theorem can be used to derive an analytic gradient.}	
\label{fig:layer}
\end{figure}

\subsection{A Smooth Global Section of $\LieGroupSO{3}$} \label{sec:representation}
\noindent Consider the surjective map\footnote{Surjectivity follows from the fact that $f$ admits a global section \Cref{prop:continuous_inverse}.}  ${f: \SymmetricMatrices{4}_\lambda \rightarrow \LieGroupSO{3}}$:
\begin{equation}
\label{eq:f_def}
	f: \Matrix{A} \mapsto \Rotation^*,
\end{equation}
where $\Rotation^*$ is the rotation matrix corresponding to 
\begin{equation} \label{eq:q_star}
\pm\Quaternion^* = \ArgMin{\Quaternion \in S^3} \Quaternion^\T\Matrix{A}\Quaternion.
\end{equation}
 As noted in \Cref{sec:learning-based-est}, the authors of \cite{zhou_continuity_2019} identified the existence of a continuous right-inverse, or section, of $f$ as important for learning. The authors further used topological arguments to demonstrate that a continuous representation is only possible if the dimension of the embedding space is greater than four. In the case of four-dimensional unit quaternions, this discontinuity manifests itself at 180$^\circ$ rotations. For our ten-dimensional representation, we present a proof that one such (non-unique) continuous mapping, $g: \LieGroupSO{3} \rightarrow \SymmetricMatrices{4}_\lambda$, exists and is indeed smooth. 
 
\begin{theorem}[Smooth Global Section, $\LieGroupSO{3} \rightarrow \SymmetricMatrices{4}_\lambda$]
\label{prop:continuous_inverse}
	Consider the surjective map $f: \SymmetricMatrices{4}_\lambda \rightarrow \LieGroupSO{3}$ such that $f(\Matrix{A})$ returns the rotation matrix defined by the two antipodal unit quaternions $\pm \Quaternion^*$ that minimize \Cref{prob:quaternion_qcqp}. There exists a smooth and global mapping, or section, $g: \LieGroupSO{3} \rightarrow \SymmetricMatrices{4}_\lambda$ such that $f(g(\Rotation)) = \Rotation$. 
\end{theorem}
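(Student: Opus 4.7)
The plan is to exhibit an explicit candidate for $g$, verify that it lands in $\SymmetricMatrices{4}_\lambda$ and satisfies the section identity, and only then tackle global smoothness. For each $\Rotation \in \LieGroupSO{3}$, let $\Quaternion_\Rotation \in S^3$ denote either of the two unit quaternions lifting $\Rotation$, and set
\begin{equation}
    g(\Rotation) \Defined \Identity - \Quaternion_\Rotation \Quaternion_\Rotation^\T.
\end{equation}
Because $(-\Quaternion_\Rotation)(-\Quaternion_\Rotation)^\T = \Quaternion_\Rotation \Quaternion_\Rotation^\T$, the right-hand side does not depend on the choice of lift, so $g$ is well-defined on $\LieGroupSO{3} \cong S^3 / \{\pm \Identity\}$.

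Next I would verify the section property. The matrix $g(\Rotation)$ is manifestly symmetric, and a one-line spectral computation shows that its eigenvalues are $0$ (simple, with eigenvector $\Quaternion_\Rotation$) and $1$ (with multiplicity three, on the orthogonal complement of $\Quaternion_\Rotation$). Thus $g(\Rotation) \in \SymmetricMatrices{4}_\lambda$, and the minimizers of \Cref{prob:quaternion_qcqp} for this data matrix are exactly the antipodal pair $\pm \Quaternion_\Rotation$, each of which represents $\Rotation$; hence $f(g(\Rotation)) = \Rotation$.

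The main obstacle is smoothness, since there is no globally continuous lift $\Rotation \mapsto \Quaternion_\Rotation$ (the double cover $\pi : S^3 \to \LieGroupSO{3}$ is nontrivial), so one cannot simply argue by composition. I would resolve this by noting that the auxiliary map $\tilde{g} : S^3 \to \SymmetricMatrices{4}$ defined by $\tilde{g}(\Quaternion) = \Identity - \Quaternion \Quaternion^\T$ is polynomial (hence smooth) and invariant under the free deck-transformation action $\Quaternion \mapsto -\Quaternion$. The universal property of smooth quotients by free, properly discontinuous group actions then yields a unique smooth $g : \LieGroupSO{3} \to \SymmetricMatrices{4}_\lambda$ with $g \circ \pi = \tilde{g}$, which coincides with the $g$ above. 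As a concrete alternative that makes smoothness transparent, one can expand the entries of $\Quaternion_\Rotation \Quaternion_\Rotation^\T$ as affine functions of the entries of $\Rotation$ via identities such as $w^2 = (1 + \mathrm{tr}(\Rotation))/4$ and $wz = (\Rotation_{21} - \Rotation_{12})/4$ (essentially Davenport's $K$-matrix), which gives an explicit closed-form smooth expression for $g(\Rotation)$ in the entries of $\Rotation$.
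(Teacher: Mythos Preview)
Your proposal is correct and follows essentially the same approach as the paper: both define the section by $g(\Rotation) = \Identity - \Quaternion_\Rotation \Quaternion_\Rotation^\T$, verify well-definedness via sign-invariance, compute the spectrum $\{0,1,1,1\}$ to establish membership in $\SymmetricMatrices{4}_\lambda$ and the section identity, and deduce smoothness by descent from the smooth map $\Quaternion \mapsto \Identity - \Quaternion\Quaternion^\T$ on $S^3$. Your smoothness argument via the universal property of the quotient (and the alternative explicit Davenport-type formula) is somewhat more carefully stated than the paper's appeal to local lifts, but the underlying idea is identical.
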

\begin{proof}
 Recall that the mapping $\Rotation(\cdot) : S^3 \rightarrow \LieGroupSO{3}$ from unit quaternions to rotations is continuous, surjective, and identifies antipodal unit-quaternions (i.e., sends them to the same rotation); this shows that $\LieGroupSO{3} \cong \RealProjective{3}$ ($\LieGroupSO{3}$ is \textit{diffeomorphic} to $\RealProjective{3}$) as smooth manifolds.  Therefore, it suffices to show that the global section $g: \RealProjective{3} \rightarrow \SymmetricMatrices{4}_\lambda$ exists.
 Let $[\Quaternion]$ be an arbitrary element of $\RealProjective{3}$ and define
 \begin{equation}
 	g([\Quaternion]) \Defined \Identity -\Quaternion\Quaternion^\T,
 \end{equation}
where $\Quaternion$ is one of the two representatives ($\pm \Quaternion$) of $[\Quaternion]$ in $S^3$.  Note that $g(\cdot)$ is well-defined over arbitrary elements of $\RealProjective{3}$, since selecting either representative leads to the same output (i.e., $g(-\Quaternion) = g(\Quaternion)$).  By construction, $g([\Quaternion])$ is the orthogonal projection operator onto the 3-dimensional orthogonal complement of $\text{span}(\Quaternion) = \text{span}(-\Quaternion)$ in $\Real^4$.  Therefore, $\lambda\{ g([\Quaternion]) \} = \lambda\{\Identity - \Quaternion\Quaternion^\T\} = \{0, 1, 1, 1\}$. It follows that $g([\Quaternion])$ defines a symmetric matrix with a simple minimum-eigenvalue (i.e., $g([\Quaternion]) \in \SymmetricMatrices{4}_\lambda$) and the eigenspace associated with the minimum eigenvalue of 0 is precisely $\text{span}(\Quaternion) = \text{span}(-\Quaternion)$.  This in turn implies that:
\begin{equation}
\pm\Quaternion =  \ArgMin{\Quaternion \in S^3} \Quaternion^\T g\left([\Quaternion]\right) \Quaternion,
\end{equation}
and therefore $f\left(g\left([\Quaternion]\right)\right) = [\Quaternion]$ so that $g(\cdot)$ is a global section of the surjective map $f(\cdot)$.  Furthermore, we can see by inspection that this global section is smooth (i.e., continuous and differentiable) since we can always represent $g(\cdot)$ locally using one of the diffeomorphic preimages of $\RealProjective{3}$ in $S^3$ as the smooth function $g_0(\Quaternion) = \Identity - \Quaternion\Quaternion^\T$. 
\end{proof}

\subsection{$\Matrix{A}$ and the Bingham Distribution}
\label{sec:bingham}
We can further show that our representation space, $\Matrix{A}(\Vector{\theta}) \in \SymmetricMatrices{4}_\lambda$, defines a Bingham distribution over unit quaternions. Consequently, we may regard $\Matrix{A}$ as encoding a \textit{belief} over rotations which facilitates the training of rotation models with uncertainty. The Bingham distribution is an antipodally symmetric distribution that is derived from a zero-mean Gaussian in $\Real^{d+1}$ conditioned to lie on the unit hypersphere, $S^d$ \cite{bingham_antipodally_1974}. For unit quaternions ($d=3$),  the probability density function of $\Vector{x} \sim \BinghamDistribution{\Matrix{D}}{\Matrix{\Lambda}}$ is 
\begin{align}
 p(\Vector{x}; \Matrix{D}, \Matrix{\Lambda}) &= \frac{1}{N(\Matrix{\Lambda})} \exp{\left(\sum_{i=1}^3 \dc_i (\Vector{d}_i^\T \Vector{x})^2\right)} \\
 &= \frac{1}{N(\Matrix{\Lambda})} \exp{\left( \Vector{x}^\T \Matrix{D} \Matrix{\Lambda} \Matrix{D}^\T \Vector{x}\right)},
\end{align}
where $\Vector{x} \in S^3$, $N(\Matrix{\Lambda})$ is a normalization constant, and $\Matrix{D} \in O(4)$ is an orthogonal matrix formed from the three orthogonal unit column vectors $\Vector{d}_i$ and a fourth mutually orthogonal unit vector, $\Vector{d}_4$. The matrix of dispersion coefficients, $\Matrix{\Lambda}$, is given by $\diag{\dc_1, \dc_2, \dc_3, 0}$ with $\dc_1 \leq \dc_2 \leq \dc_3 \leq 0$ (note that these dispersion coefficients are eigenvalues of the matrix $\Matrix{D} \Matrix{\Lambda} \Matrix{D}^\T$).  Each $\dc_i$ controls the spread of the probability mass along the direction given by $\Vector{d}_i$ (a small magnitude $\dc$ implying a large spread and vice-versa). The mode of the distribution is given by $\Vector{d}_4$. 

Crucially, $\BinghamDistribution{\Matrix{D}}{\Matrix{\Lambda}} = \BinghamDistribution{\Matrix{D}}{\Matrix{\Lambda} + c \Identity }$ for all $c \in \Real$ \citep{darling_uncertainty_2016}. Thus, a careful diagonalization of our representation, $\Matrix{A} \in \SymmetricMatrices{4}_\lambda$, fully describes $\BinghamDistribution{\Matrix{D}}{\Matrix{\Lambda}}$. Namely, to ensure that the mode of the density is given by the solution of \Cref{prob:quaternion_qcqp}, we set $\Matrix{D} \Matrix{\Lambda} \Matrix{D}^\T = -\Matrix{A}$ since the smallest eigenvalue of $\Matrix{A}$ is the largest eigenvalue of $-\Matrix{A}$.

To recover the dispersion coefficients, $\dc_i$, we evaluate the non-zero eigenvalues of $-\Matrix{A} + {\eigval_1} \Identity$ (defining the equivalent density $\BinghamDistribution{\Matrix{D}}{\Matrix{\Lambda} + \lambda_1 \Identity }$) where $\eigval_i$ are the eigenvalues of $\Matrix{A}$ in ascending order as in \Cref{eq:our_rep}. Then, $\{\dc_1, \dc_2, \dc_3 \} = \{-\eigval_4 + \eigval_1, -\eigval_3 + \eigval_1, -\eigval_2 + \eigval_1\}$.  This establishes a relation between the eigenvalue \textit{gaps} of $\Matrix{A}$ and the dispersion of the Bingham density defined by $\Matrix{A}$. 
 
\section{Using $\Matrix{A}$ with Deep Learning}
\label{sec:A_deep_learning}
We consider applying our formulation to the learning task of fitting a set of parameters, $\Vector{\pi}$, such that the deep rotation regressor $\Rotation = \mathtt{NN}(\Vector{x}; \Vector{\pi})$ minimizes a training loss $\mathcal{L}$ while generalizing to unseen data (as depicted in \Cref{fig:front_page}).
\subsection{Self-Supervised Learning}
In many self-supervised learning applications, one requires a rotation matrix to transform one set of data onto another. Our representation admits a differentiable transformation into a rotation matrix though $\Rotation^*=\Rotation(\pm \Quaternion^*)$, where $\Rotation(\Quaternion)$ is the unit quaternion to rotation matrix projection (e.g., Eq. 4 in \cite{zhou_continuity_2019}). Since our layer solves for a pair of antipodal unit quaternions ($\pm \Quaternion^* \in \RealProjective{3} \cong \LieGroupSO{3}$), and therefore admits a smooth global section, we can avoid the discontinuity identified in \cite{zhou_continuity_2019} during back-propagation. In this work, however, we limit our attention to the supervised rotation regression problem to directly compare with the continuous representation of \cite{zhou_continuity_2019}.

\subsection{Supervised Learning: Rotation Loss Functions}
For supervised learning over rotations, there are a number of possible choices for loss functions that are defined over $\LieGroupSO{3}$. A survey of different bi-invariant metrics which are suitable for this task is presented in \citep{hartley_rotation_2013}. For example, four possible loss functions include,
\begin{align}
	\mathcal{L}_\text{quat}\left(\Quaternion, \Quaternion_{\text{gt}}\right) &= \QuatDist{\Quaternion}{\Quaternion_{\text{gt}}}^2,\\
	\mathcal{L}_\text{chord}\left(\Rotation, \Rotation_{\text{gt}}\right) &= \ChordDist{\Rotation}{\Rotation_{\text{gt}}}^2,\\
		\mathcal{L}_\text{ang}\left(\Rotation, \Rotation_{\text{gt}}\right) &= \AngDist{\Rotation}{\Rotation_{\text{gt}}}^2,\\
	\mathcal{L}_\textsc{Bingham}\left(\Matrix{D},  \Matrix{\Lambda}, \Quaternion_{\text{gt}}\right) &= \Quaternion_{\text{gt}}^\T \Matrix{D} \Matrix{\Lambda} \Matrix{D}^\T \Quaternion_{\text{gt}} +  N(\Matrix{\Lambda}),
\end{align}
where\footnote{Note that it is possible to relate all three metrics without converting between representations--e.g., $d_{\text{chord}}^2\left(\Rotation(\Quaternion),\Rotation(\Quaternion_{\text{gt}})\right) = 2d_{\text{quat}}^2(4 - d_{\text{quat}}^2)$.}
\begin{align}
\QuatDist{\Quaternion}{\Quaternion_{\text{gt}}} &= \min\left(\Norm{\Quaternion_{\text{gt}} - \Quaternion}_2, \Norm{\Quaternion_{\text{gt}} + \Quaternion}_2\right), \\
\ChordDist{\Rotation}{\Rotation_{\text{gt}}} &= \Norm{\Rotation_{\text{gt}} - \Rotation}_{\text{F}}, \\
\AngDist{\Rotation}{\Rotation_{\text{gt}}} &= \Norm{\MatLog{\Rotation\Rotation_{\text{gt}}^\T}},
\end{align}
and $\MatLog{\cdot}$ is defined as in \cite{sola_micro_2018}. Since our formulation fully describes a Bingham density, it is possible to use the likelihood loss, $\mathcal{L}_\textsc{Bingham}$, to train a Bingham belief (in a similar manner to \cite{glitschenski_deep_2020} who use an alternate 19-parameter representation). However, the normalization constant $N(\Matrix{\Lambda})$ is a hypergeometric function that is non-trivial to compute. The authors of \cite{glitschenski_deep_2020} evaluate this constant using a fixed-basis non-linear approximation aided by a precomputed look-up table. In this work, we opt instead to compare to other point representations and leave a comparison of different belief representations to future work. Throughout our experiments, we use the chordal loss $\mathcal{L}_\text{chord}$ which can be applied to both rotation matrix and unit quaternion outputs.\footnote{The chordal distance has also been shown to be effective for initialization and optimization in SLAM \cite{carlone_initialization_2015,rosen_se-sync_2019}.} However, despite eschewing a likelihood loss, we can still extract a useful notion of uncertainty from deep neural network regression using the eigenvalues of our matrix $\Matrix{A}$. To see why, we present a final interpretation of our representation specifically catered to the structure of deep models.

\section{$\Matrix{A}$ and Rotation Averaging} 
\label{sec:rotation_averaging}
We take inspiration from \citep{sattler_understanding_2019} wherein neural-network-based pose regression is related to an interpolation over a set of base poses. We further elucidate this connection by specifically considering rotation regression and relating it to rotation averaging over a (learned) set of base rotations using the chordal distance. Since these base rotations must span the training data, we argue that $\Matrix{A}$ can represent a notion of epistemic uncertainty (i.e., distance to training samples) without an explicit likelihood loss. 

Consider that given $N$ rotation samples expressed as unit quaternions $\Quaternion_i$, the rotation average according to the chordal metric can be computed as \cite{hartley_rotation_2013}:
\begin{align}
	\Mean{\Quaternion}_{d_\text{chord}} = \ArgMin{\Quaternion \in S^3} \sum_i^N d_{\text{chord}} (\Quaternion, \Quaternion_i) = f\left(-\sum_i^N \Quaternion_i \Quaternion_i^\T\right),
\end{align}
where $f(\cdot)$ is defined by\footnote{The negative on $\Quaternion_i \Quaternion_i^\T$ is necessary since $f$ computes the eigenvector associated with $\lambda_{\text{min}}$ whereas the average requires $\lambda_{\text{max}}$.} \Cref{eq:f_def}. A weighted averaging version of this operation is discussed in \citep{markley_averaging_2007}. Next, consider a feed-forward neural network that uses our representation by regressing ten parameters, $\Vector{\theta} \in \Real^{10}$. If such a network has a final fully-connected layer prior to its output, we can separate it into two components: (1) the last layer parameterized by the weight matrix $\Matrix{W} \in \Real^{10\times N}$ and the bias vector $\Vector{b} \in \Real^{10}$, and (2) the rest of the network $\Vector{\gamma}(\cdot)$ which transforms the input $\Vector{x}$ (e.g., an image) into $N$ coefficients given by $\gamma_i$. 
The output of such a network is then given by 
\begin{align}\
\Quaternion^* = f(\Matrix {A}(\Vector{\theta}(\Vector{x})))
&= f\left(\Matrix{A}\left(\sum_i^N \Vector{w}_i\gamma_i(\Vector{x}) + \Vector{b}\right)\right) \\
&= f\left(\sum_i^N \Matrix{A}(\Vector{w}_i)\gamma_i(\Vector{x}) + \Matrix{A}(\Vector{b})\right)
\end{align}
where $\Vector{w}_i$ refers to the $i$th column of $\Matrix{W}$ and the second line follows from the linearity of the mapping defined in \Cref{eq:A_theta_map}. In this manner, we can view rotation regression with our representation as analogous to computing a weighted chordal average over a set of learned base orientations (parameterized by the symmetric matrices defined by the column vectors $\Vector{w}_i$ and $\Vector{b}$). During training the network tunes both the bases and the weight function $\Vector{\gamma}(\cdot)$.\footnote{We can make a similar argument for the unit quaternion representation since the normalization operation $f_n(\Vector{y}) = \Vector{y} \Norm{\Vector{y}}^{-1}$ corresponds to the mean $\Mean{\Quaternion}_{d_\text{quat}} = \ArgMin{\Quaternion \in S^3} \sum_i^N d_{\text{quat}} (\Vector{q}, \Vector{q}_i) = f_n\left(\sum_i^N \Quaternion_i\right)$.}

\subsection{Dispersion Thresholding (DT) as Epistemic Uncertainty}
\label{sec:dispersion_thresholding}
The positive semi-definite (PSD) matrix $\sum_i^N \Quaternion_i \Quaternion_i^\T$ is also called the \textit{inertia} matrix in the context of Bingham maximum likelihood estimation because the operation $f\left(-\sum_i^N \Quaternion_i \Quaternion_i^\T\right)$ can be used to compute the maximum likelihood estimate of the mode of a Bingham belief given $N$ samples \cite{glover_tracking_2013}.  Although our symmetric representation $\Matrix{A}$ is not necessarily PSD, we can perform the same canonicalization as in \Cref{sec:bingham}, and interpret  $\Matrix{\Lambda} = -\Matrix{A} + {\eigval_1} \Identity$ as the (negative) inertia matrix. Making the connection to Bingham beliefs, we then use 
\begin{align}
	\Trace{\Matrix{\Lambda}} = \sum_i^3 \dc_i = 3\eigval_1 - \eigval_2 - \eigval_3 - \eigval_4
\end{align}
as a measure of network uncertainty.\footnote{Note that under this interpretation the matrix $\Matrix{\Lambda}$ does not refer directly to the Bingham dispersion matrix parameterized by the inertia matrix, but the two can be related by inverting an implicit equation--see \cite{glover_tracking_2013}.} We find empirically that this works surprisingly well to measure epistemic uncertainty (i.e., model uncertainty) without any explicit penalty on $\Matrix{\Lambda}$ during training. To remove OOD samples, we compute a threshold on $\Trace{\Matrix{\Lambda}}$ based on quantiles over the training data (i.e., retaining the lowest $q$th quantile). We call this technique \textit{dispersion thresholding} or DT. Despite the intimate connection to both rotation averaging and Bingham densities, further work is required to elucidate why exactly this notion of uncertainty is present without the use of an uncertainty-aware loss like $\mathcal{L}_\textsc{Bingham}$. We leave a thorough investigation for future work, but note that this metric can be related to the norm $\Norm{\sum_i^N \Vector{w}_i\gamma_i(\Vector{x}) + \Vector{b}}$ which we find empirically to also work well as an uncertainty metric for other rotation representations. We conjecture that the `basis' rotations $\Vector{w}_i, \Vector{b}$ must have sufficient spread over the space to cover the training distribution. During test-time, OOD inputs, $\Vector{x}_\text{OOD}$, result in weights, $\gamma_i(\Vector{x}_\text{OOD})$, which are more likely to `average out' these bases due to the compact nature of $\LieGroupSO{3}$.  Conversely, samples closer to training data are more likely to be nearer to these learned bases and result in larger $\left|\Trace{\Matrix{\Lambda}}\right|$.

\section{Experiments} \label{sec:experiments}
We present the results of extensive synthetic and real experiments to validate the benefits of our representation. In each case, we compare three\footnote{We note that regressing $3\times3$ rotation matrices directly would also satisfy the continuity property of \cite{zhou_continuity_2019} but we chose not to include this as the \texttt{6D} representation fared better in the experiments of \citep{zhou_continuity_2019}.} representations of $\LieGroupSO{3}$: (1) unit quaternions (i.e., a normalized 4-vector, as outlined in \Cref{fig:front_page}), (2) the best-performing continuous six-dimensional representation, \texttt{6D}, from \citep{zhou_continuity_2019}, and (3) our own symmetric-matrix representation, $\Matrix{A}$. We report all rotational errors in degrees based on $\AngDist{\cdot}{\cdot}$. 

\subsection{Wahba Problem with Synthetic Data}
First, we simulated a dataset where we desired a rotation from two sets of unit vectors with known correspondence. We considered the generative model,
\begin{equation}
\Vector{v}_i = \hat{\Rotation} \Vector{u}_i + \Vector{\epsilon}_i~, ~~ \Vector{\epsilon}_i \sim \NormalDistribution{\Vector{0}}{\sigma^2 \Identity},
\end{equation}
where $\Vector{u}_i$ are sampled from the unit sphere. For each training and test example, we sampled $\hat{\Rotation}$ as $\MatExp{\Vector{\hat{\phi}}}$ (where we define the capitalized exponential map as \cite{sola_micro_2018}) with $\Vector{\hat{\phi}} = \phi \frac{\Vector{a}}{\Norm{\Vector{a}}}$ and $\Vector{a} \sim \NormalDistribution{\Vector{0}}{\Identity}$, $\phi \sim U[	0, \phi_{\text{max}})$, and set $\sigma = 0.01$. We compared the training and test errors for different learning rates in selected range using the \texttt{Adam} optimizer. Taking inspiration from \cite{zhou_continuity_2019}, we employed a dynamic training set and constructed each mini-batch from $100$ sampled rotations with $100$ noisy matches, ${\Vector{u}_i, \Vector{v}_i}$, each. We defined an epoch as five mini-batches. Our neural network structure mimicked the convolutional structure presented in \cite{zhou_continuity_2019} and we used $\mathcal{L}_\text{chord}$ to train all models. 

\begin{figure}[h!]	
\centering
\includegraphics[width=\columnwidth]{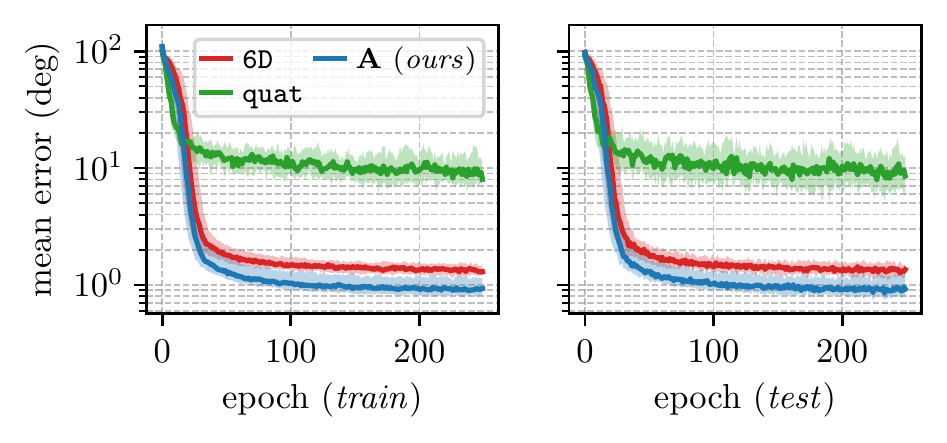}
\caption{Angular errors for 25 different trials each with learning rates sampled from the range $\{10^{-4}, 10^{-3}\}$ (log-uniform) and $\phi_{\text{max}} = 180^\circ$. We show $\{10, 50, 90\}^{\text{th}}$ percentiles at each epoch. }	
\label{fig:wabha_random_unit}
\end{figure}

%TODO: Change to A (sym) in this figure!!
\begin{figure}[h!]	
\centering
\includegraphics[width=\columnwidth]{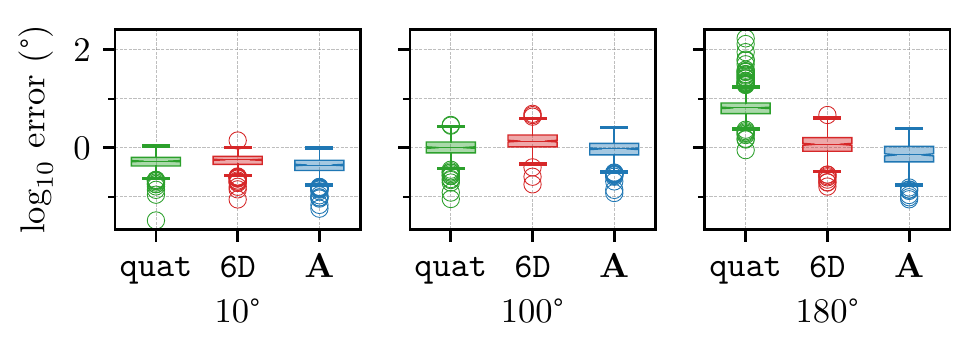}
\caption{Box and whiskers plots for three different settings of  $\phi_{\text{max}}$ for three rotation representations applied to synthetic data. The unit quaternion representation results in large errors as $\phi_{\text{max}} \rightarrow 180^\circ$.}	
\label{fig:synthetic_rotangle_box}
\vspace{-0.5cm}
\end{figure}

\Cref{fig:wabha_random_unit} displays the results of 25 experimental trials with different learning rates on synthetic data. For arbitrary rotation targets, both continuous representations outperform the discontinuous unit quaternions, corroborating the results of \citep{zhou_continuity_2019}. Moreover, our symmetric representation achieves the lowest errors across training and testing. \Cref{fig:synthetic_rotangle_box} depicts the performance of each representation on training data restricted to different maximum angles. As hypothesized in \cite{zhou_continuity_2019}, the discontinuity of the unit quaternion manifests itself on regression targets with angles of magnitude near 180 degrees.

\begin{figure}	
	\centering
	\begin{subfigure}[t]{\columnwidth}
\centering
\includegraphics[width=0.95\columnwidth]{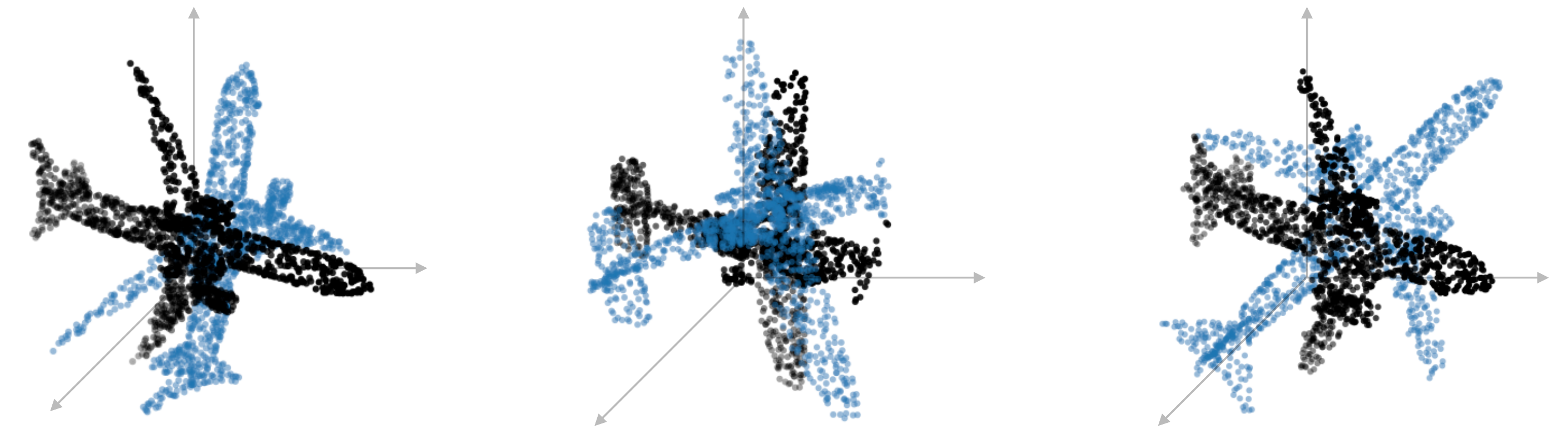}
\caption{We sample the \ShapeNet \texttt{airplane} category and randomly rotate point clouds to generate our training and test data.}	
\label{fig:shapenet_airplanes}
	\end{subfigure}
	\begin{subfigure}[t]{\columnwidth}
\centering
\includegraphics[width=\columnwidth]{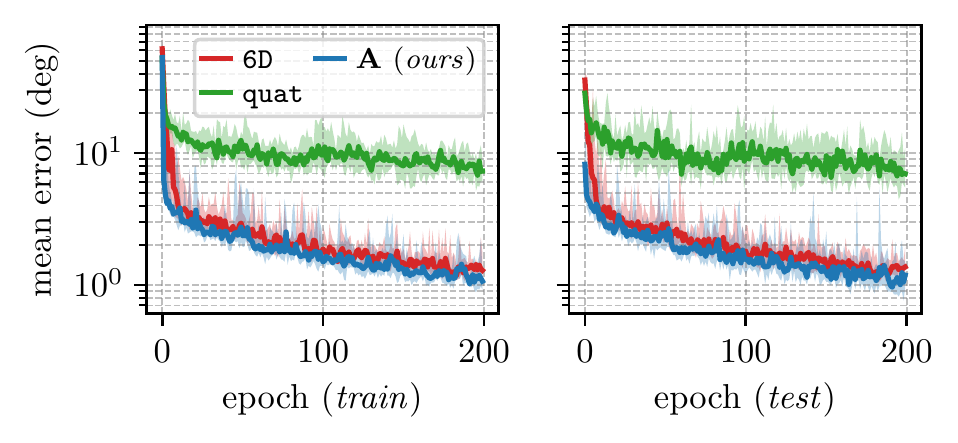}
\caption{Mean angular errors for 10 different \ShapeNet dataset trials each with learning rates sampled in the range $\{10^{-4}, 10^{-3}\}$ (log-uniform). We plot $\{10, 50, 90\}^{\text{th}}$ percentiles at each epoch.}
\label{fig:shapenet_learning_rates}
	\end{subfigure}
	\caption{A summary of our \ShapeNet experiments.}
	\vspace{-0.5cm}
\end{figure}

\subsection{Wahba Problem with \ShapeNet}

Next, we recreated the experiment from \cite{zhou_continuity_2019} on 2,290 airplane point clouds from \ShapeNet \cite{shapenet2015}, with 400 held-out point clouds. During each iteration of training we randomly selected a single point cloud and transformed it with 10 sampled rotation matrices (\Cref{fig:shapenet_airplanes}). At test time, we applied 100 random rotations to each of the 400 held-out point clouds. \Cref{fig:shapenet_learning_rates} compares the performance of our representation against that of unit quaternions and the \texttt{6D} representation, with results that are similar to the synthetic case in \Cref{fig:wabha_random_unit}.

\begin{figure}	
	\captionsetup[subfigure]{justification=centering}
	\centering
	\begin{subfigure}{0.52\columnwidth}
		\includegraphics[width=\linewidth]{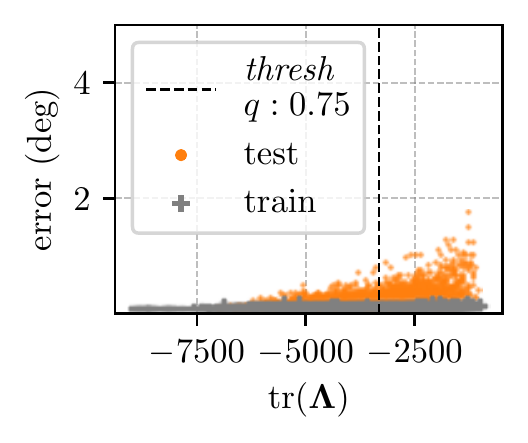}
		\caption{Sequence \texttt{02}.}
	\end{subfigure}\begin{subfigure}{0.47\columnwidth}
		\includegraphics[width=\linewidth]{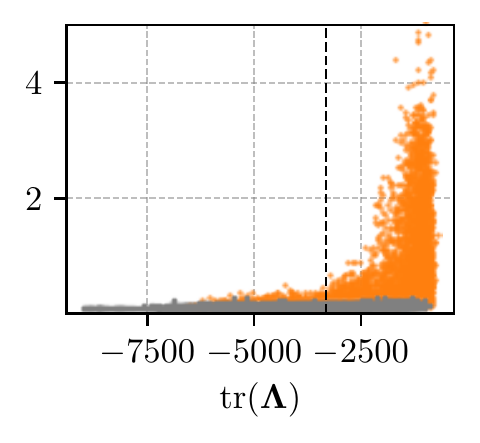}
		\caption{Sequence \texttt{02} (\textbf{corrupted}).}
	\end{subfigure} 
	\begin{subfigure}{\columnwidth}
		\includegraphics[width=\columnwidth]{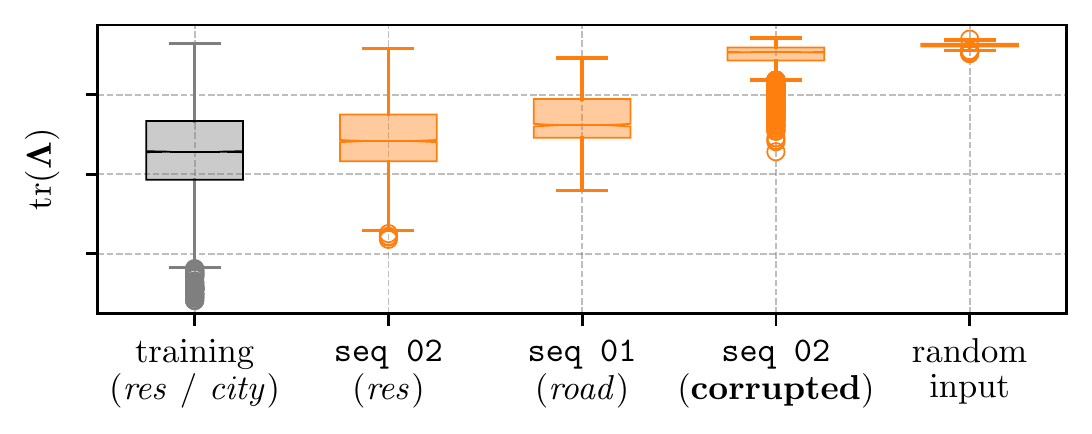}
	\caption{Different test sets.}
\label{fig:kitti_box_02}		
\end{subfigure}
	\caption{The dispersion thresholding metric, $\Trace{\Matrix{\Lambda}}$, plotted for data corresponding to test sequence \texttt{02} (refer to text for train/test split). Corruption is applied to the test data without altering the training data. DT thresholding leads to an effective OOD rejection scheme without the need to retrain the model.}
	\label{fig:kitti_egt_scatter}
	\vspace{-0.45cm}
\end{figure}

\begin{figure}	
	\centering
	\begin{subfigure}[t]{0.515\columnwidth}
		\centering
		\includegraphics[width=\linewidth]{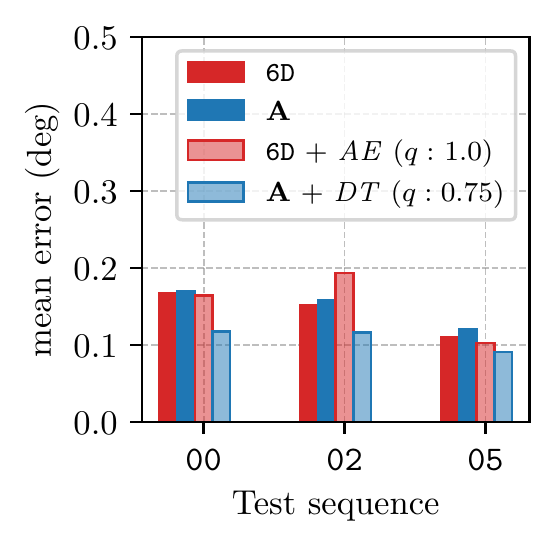}
		\caption{Uncorrupted images.}
		\label{fig:kitti_bar_errors_standarad}
	\end{subfigure}\begin{subfigure}[t]{0.47\columnwidth}
		\centering
		\includegraphics[width=\linewidth]{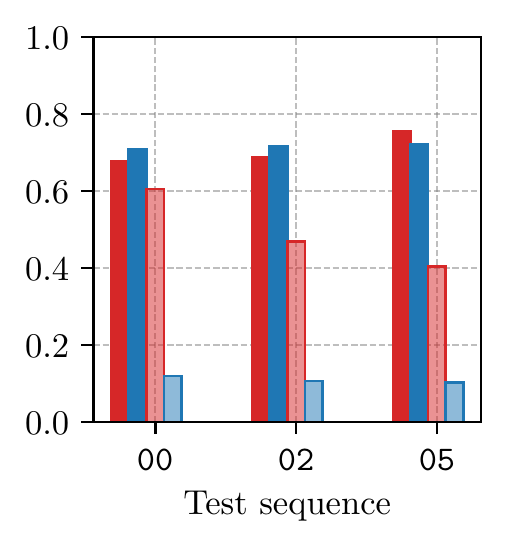}
		\caption{Corrupted images.}
		\label{fig:kitti_bar_errors_corrupted}
	\end{subfigure}
	\caption{Mean rotation errors for three different test sequences from the KITTI odometry dataset. Since these relative rotation targets are `small', we see no significant difference in baseline accuracy. However, our dispersion thresholding technique (\Cref{sec:dispersion_thresholding}) can significantly improve performance, and outperforms an alternative OOD method based on reconstruction error using an auto-encoder (AE). See \Cref{tab:kitti_relative_rotation_stats} for full statistics.}
	\vspace{-0.55cm}
	\label{fig:kitti_bar_errors}
\end{figure}

\begin{table*}[]
	\centering
	\caption{Relative rotation learning on the KITTI dataset with different representations of $\LieGroupSO{3}$. All training is done on uncorrupted data. We show that our OOD technique, DT, can dramatically improve rotation accuracy by rejecting inputs that are likely to lead to high error.}
	\begin{threeparttable}
	\begin{tabular}{clccccc}
		\toprule
		& \textbf{} & \multicolumn{2}{c}{\textbf{Normal Test}} & \multicolumn{3}{c}{\textbf{Corrupted Test (50\%)}} \\ \cmidrule{2-7} 
		\textbf{Sequence} & Model & Mean Error ($^{\circ}$) & Kept (\%) & Mean Error ($^{\circ}$) & Kept (\%) & Precision\tnote{3} (\%) \\ \midrule
		\multirow{5}{*}{\texttt{00} (4540 pairs)}  & \texttt{quat} & 0.16 & 100 & 0.74 & 100 & --- \\
		& \texttt{6D} \cite{zhou_continuity_2019} & 0.17 & 100 & 0.68 & 100 & ---\\
		& \texttt{6D} + auto-encoder (\textit{AE})\tnote{1} & 0.16 & 32.0 & 0.61 & 19.1 & 57.4\\
        & $\Matrix{A}$ \textit{(ours)} & 0.17 & 100 & 0.71 & 100 & ---\\
		& $\Matrix{A}$ + DT (\Cref{sec:dispersion_thresholding})\tnote{2} & \textbf{0.12} & 69.5 & \textbf{0.12} & 37.3 & \textbf{99.00} \\ \midrule
		
		\multirow{5}{*}{\texttt{02} (4660 pairs)}  & \texttt{quat} & 0.16 & 100 & 0.64 & 100 & --- \\
		& \texttt{6D} & 0.15 & 100 & 0.69 & 100 & ---\\
		& \texttt{6D} + AE\tnote{1} & 0.19 & 15.4 & 0.47 & 9.9 & 77.83\\
		& $\Matrix{A}$  & 0.16 & 100 & 0.72 & 100 & ---\\
		& $\Matrix{A}$ + DT\tnote{2} & \textbf{0.12} & 70.1 & \textbf{0.11} & 34.0 & \textbf{99.50} \\ \midrule
		\multirow{5}{*}{\texttt{05} (2760 pairs)}  & \texttt{quat} & 0.13 & 100 & 0.72 & 100 & --- \\
		& \texttt{6D} & 0.11 & 100 & 0.76 & 100 & ---\\
		& \texttt{6D} + AE\tnote{1} & 0.10 & 41.6 & 0.40 & 27.7 & 76.05\\
		& $\Matrix{A}$   & 0.12 & 100 & 0.72 & 100 & ---\\
		& $\Matrix{A}$ + DT\tnote{2} & \textbf{0.09} & 79.1 & \textbf{0.10} & 39.2 & \textbf{97.41} \\ \bottomrule
	\end{tabular}
    \begin{tablenotes}
    	\item[1] Thresholding based on $q = 1.0$. \item[2] Thresholding based on $q = 0.75$. \item[3] \% of corrupted images that are rejected.
     \end{tablenotes}
\label{tab:kitti_relative_rotation_stats}
\end{threeparttable}	
\end{table*}

\subsection{Visual Rotation-Only Egomotion Estimation: KITTI}
Third, we used our representation to learn relative rotation from sequential images from the KITTI odometry dataset \citep{geiger_vision_2013}. By correcting or independently estimating rotation, these learned models have the potential to improve classical visual odometry pipelines \citep{peretroukhin_dpc-net_2018}. We note that even in the limit of no translation, camera rotation can be estimated independently of translation from a pair of images \citep{kneip_finding_2012}. To this end, we built a convolutional neural network that predicted the relative camera orientation between sequential images recorded on sequences from the \textit{residential} and \textit{city} categories in the dataset. We selected sequences \texttt{00}, \texttt{02}, and \texttt{05} for testing and trained three models with the remaining sequences for each. In accord with the results in \Cref{fig:synthetic_rotangle_box}, we found that there was little change in performance across different $\LieGroupSO{3}$ representations since rotation magnitudes of regression targets from KITTI are typically on the order of one degree. However, we found that our DT metric acted as a useful measure of epistemic uncertainty. To further validate this notion, we manually corrupted the KITTI test images by setting random rectangular regions of pixels to uniform black. \Cref{fig:kitti_box_02} displays the growth in magnitude that is manifest in the DT metric as data becomes less similar to that seen during training. \Cref{fig:kitti_egt_scatter} displays the estimation error for test sequence \texttt{02} with and without corruption. We stress that these corruptions are only applied to the test data; as we highlight numerically in \Cref{tab:kitti_relative_rotation_stats}, DT is able to reject corrupted images and other images that are likely to lead to high test error. Indeed, in all three test sequences, we observed a nearly constant mean rotation error for our formulation ($\Matrix{A}$ + DT) with and without data corruption.

\subsubsection{Auto-Encoder (AE) OOD Method}
We compared our DT thresholding approach with an auto-encoder-based OOD technique inspired by work in novelty detection in mobile robotics \cite{amini_variational_2018,richter_safe_2017}. For each training set, we trained an auto-encoder using an $L_1$ pixel-based reconstruction loss, and then rejected test-time inputs whose mean pixel reconstruction error is above the $q$th percentile in training. \Cref{fig:kitti_bar_errors} and \Cref{tab:kitti_relative_rotation_stats} detail results that demonstrate that our representation paired with DT performs significantly better than the \texttt{6D} representation with an AE-based rejection method. Importantly, we stress that our notion of uncertainty is embedded within the representation and does not require the training of an auxiliary OOD classifier.

\subsection{MAV Indoor-Outdoor Dataset}

\begin{figure}[t]	
	\centering
	\begin{subfigure}[]{\columnwidth}
\includegraphics[width=\linewidth]{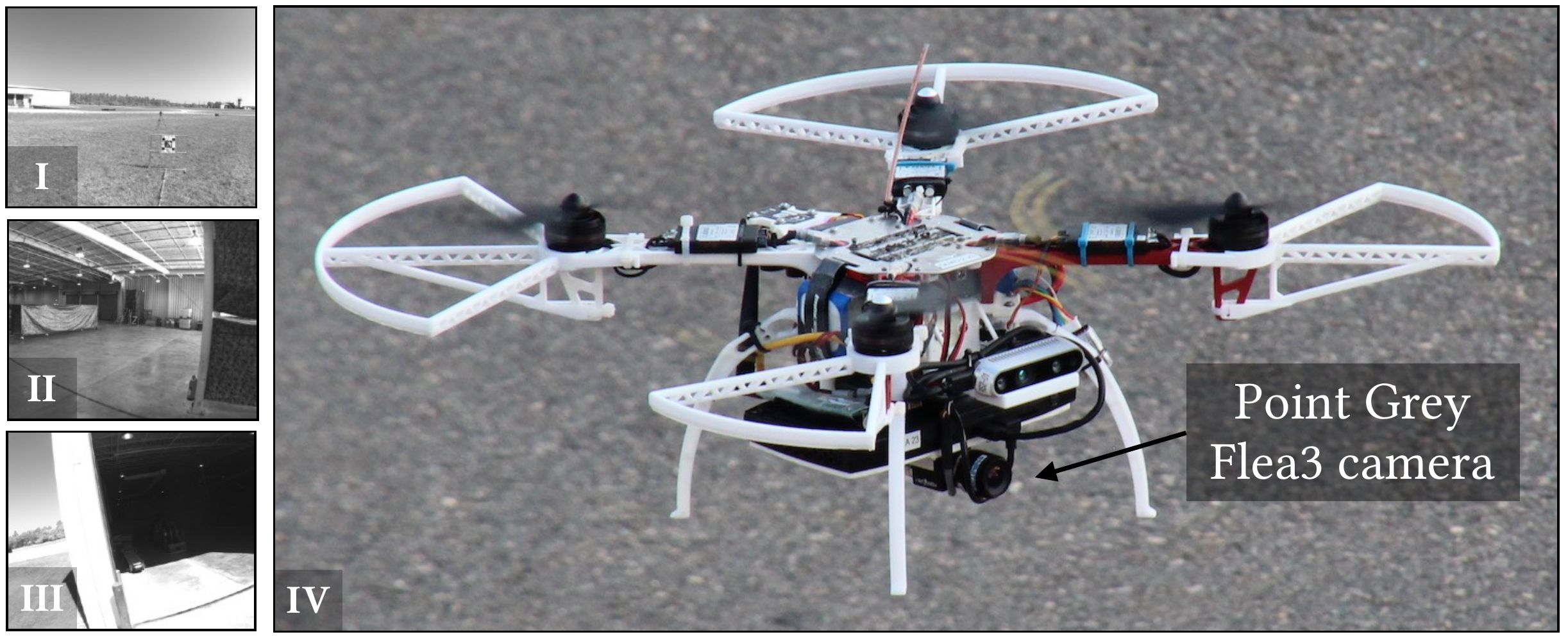}
\caption{MAV with a Point Grey Flea3 global shutter camera (IV) in three environments: outdoor (I), indoor (II) and transition (III).}
\label{fig:fla_drone}
	\end{subfigure}
	\begin{subfigure}[]{\columnwidth}
\includegraphics[width=\linewidth]{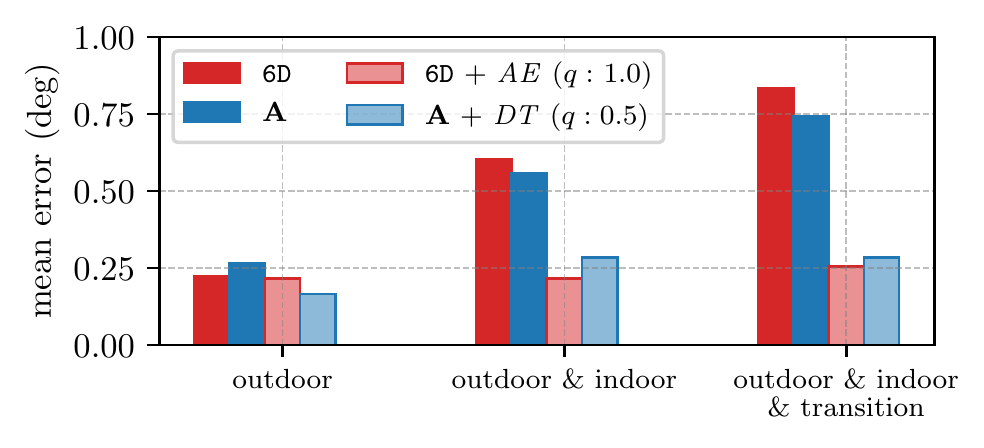}
\caption{Mean rotation errors for three different test sequences from the MAV dataset, using a model trained on outdoor data. The auto-encoder rejection performs well, yet our DT technique is able to match its performance without requiring a separate model.}
\label{fig:fla_bar}
	\end{subfigure}
\caption{A summary of our MAV experiments.}
\end{figure}

Finally, we applied our representation to the task of training a relative rotation model on data collected using a Flea3 global-shutter camera mounted on the Micro Aerial Vehicle depicted in \Cref{fig:fla_drone}. We considered a dataset in which the vehicle undergoes dramatic lighting and scene changes as it transitions from an outdoor to an indoor environment. We trained a model using outdoor images with ground-truth rotation targets supplied by an onboard visual-inertial odometry system (note that since we were primarily interested in characterizing our dispersion thresholding technique, such coarse rotation targets sufficed), and an identical network architecture to that used in the KITTI experiments. \Cref{fig:fla_bar} details the performance of our representation against the \texttt{6D} representation paired with an auto-encoder OOD rejection method. We observe that, compared to the KITTI experiment, the AE-based OOD rejection technique fares much better on this data. We believe this is a result of the way we partitioned our MAV dataset; images were split into a train/test split using a random selection, so test samples were recorded very near (temporally and spatially) to training samples. Nevertheless, our method, $\Matrix{A}$ + DT, performs on par with \texttt{6D} + AE on all three test sets, but does not require the training of a separate classifier.

\section{Discussion and Limitations}
Rotation representation is an important design criterion for state estimation and no single representation is optimal in all contexts; ours is no exception. Importantly, the differentiable layer which solves \Cref{prob:quaternion_qcqp} incurs some computational cost. This cost is negligible at test-time but can slow learning during training when compared to other representations that require only basic operations like normalization. In practice, we find that for common convolutional networks, training is bottlenecked by other parts of the learning pipeline and our representation adds marginal processing time. For more compact models, however, training time can be increased. Further, our representation does not include symmetric matrices where the minimal eigenvalue is non-simple. In this work, we do not enforce this explicitly; instead, we assume that this will not happen to within machine precision. In practice, we find this occurs exceedingly rarely, though explicitly enforcing this constraint  is a direction of future research.

\section{Conclusions and Future Work} \label{sec:conclusion}

In this work, we presented a novel representation of $\LieGroupSO{3}$ based on a symmetric matrix $\Matrix{A}$. Our representation space can be interpreted as a data matrix of a QCQP, as defining a Bingham belief over unit quaternions, or as parameterizing a weighted rotation average over a set of base rotations. Further, we proved that this representation admits a smooth global section of $\LieGroupSO{3}$ and developed an OOD rejection method based solely on the eigenvalues of $\Matrix{A}$. Avenues for future work include combining our representation with a Bingham likelihood loss, and further investigating the connection between $\Matrix{A}$, epistemic uncertainty, and rotation averaging. Finally, we are especially interested in leveraging our representation to improve the reliability and robustness (and ultimately, safety) of learned perception algorithms in real-world settings.

{\footnotesize
\bibliographystyle{plainnat}
\bibliography{robotics_abbrv,references}
}
\end{document}